\documentclass{article}

\usepackage{PRIMEarxiv}

\usepackage[utf8]{inputenc} 
\usepackage[T1]{fontenc}    
\usepackage{hyperref}       
\usepackage{url}            
\usepackage{booktabs}       
\usepackage{amsfonts}       
\usepackage{nicefrac}       
\usepackage{microtype}      
\usepackage{lipsum}
\usepackage{fancyhdr}       
\usepackage{graphicx}       
\graphicspath{{media/}}     

\usepackage{amsmath,amsthm}
\usepackage{bm}
\usepackage{mathtools}
\usepackage{amssymb}
\usepackage{longtable}
\usepackage{comment}
\usepackage{tikz}
\usepackage{xcolor}
\usepackage{algorithm}
\usepackage[noend]{algpseudocode}
\usepackage{amsmath,amssymb} 
\usepackage{comment}
\usepackage{color}
\usepackage{todonotes}

\newtheorem{definition}{Definition}
\newtheorem{proposition}{Proposition}

\title{\Large Enumeration-Guided Training of\\ Neural Hamilton-Jacobi Safety Certificates}

\author{
  Mihaela-Larisa Clement$^{1,2}$, Tosca Pierro$^{3}$, Ezio Bartocci$^{1}$ and Luca Marzari$^{1}$
  \\
  $1$ TU Wien, Vienna, Austria\\
  $2$ AIT Austrian Institute of Technology, Vienna, Austria\\
  $3$ Università Politecnica delle Marche, Italy\\
  Contact authors: \textit{mihaela-larisa.clement@tuwien.ac.at}, \textit{luca.marzari@tuwien.ac.at}.}

\begin{document}
\maketitle

\begin{abstract}
Hamilton–Jacobi (HJ) reachability analysis offers formal safety guarantees for control systems, but training accurate neural certificates remains difficult, particularly when unsafe regions are hard to characterize. We propose a verification-guided training framework that addresses this by using probabilistic enumeration of safety constraints to detect forward-invariance violations, i.e., states where the learned certificate incorrectly classifies unsafe behavior. This produces a compact, targeted set of counterexamples that more precisely captures the failure boundary than inputs generated by standard adversarial attack methods. Crucially, using this representation to fine-tune a pretrained HJ value function yields larger certified regions and reduces certification time across all tested benchmarks. We further introduce a shielding mechanism based on the refined certificates to enforce safety constraints at deployment time. We evaluate the proposed safety filter in a realistic, physics-based Isaac Sim environment and through real-world autonomous racing experiments, achieving zero safety constraint violations compared to a nominal controller and a backup policy based on Linear Quadratic Regulation (LQR). These results demonstrate the effectiveness of the proposed shielding mechanism in ensuring safe operation under high-speed, safety-critical conditions.
\end{abstract}


\section{Introduction}
For control systems interacting with the physical world, safety is a primary concern in controller design, as controllers must satisfy safety constraints at all time steps over an infinite horizon. Consider, for example, a robot locomotion task where safety requires maintaining a positive clearance from surrounding obstacles at all times. Two distinct challenges arise in such settings. The first is identifying the set of states from which long-term constraint satisfaction is guaranteed, the so-called \textit{feasible region}, since the deployment of a controller should be restricted to this set. The second is actively enforcing safety at runtime, as even when a controller is initialized within the feasible region, it may issue actions that push the system toward constraint violations, and an explicit safety mechanism is required to intercept and correct such actions before they are executed. Addressing both challenges simultaneously is essential for deploying safety-critical controllers in realistic environments.

A widely adopted approach to enforce safety constraints is based on Model Predictive Control (MPC)~\cite{rawlings2017model}. Nominal MPC incorporates state and input constraints directly into a finite-horizon optimal control problem solved repeatedly online. Under suitable assumptions, nominal MPC can exhibit a degree of robustness; however, when hard state constraints are present, these safety margins can become very small. To account for bounded disturbances and model uncertainty, robust MPC variants such as tube-based methods~\cite{10.1016/j.automatica.2004.08.019,Chisci2001} confine the system trajectory to a tube around the nominal trajectory, ensuring constraint satisfaction through tightening of the nominal problem. Nonetheless, constraint tightening can introduce conservatism that degrades performance in demanding tasks, and the finite planning horizon limits the scope of safety guarantees that can be provided.

A more principled framework for characterizing safety over an infinite horizon is Hamilton–Jacobi (HJ) reachability analysis~\cite{HJ}, which offers a formal setting for computing feasible regions of control systems subject to safety constraints. HJ reachability defines the feasible region as the zero-sublevel set of a value function $V(x)$, representing the worst-case constraint violation along an optimally controlled trajectory originating from state $x$. Compared with MPC-based approaches, HJ reachability provides direct global safety guarantees rather than relying on finite-horizon optimization and repeated replanning. This makes $V$ a natural candidate not only for certifying the feasible region, but also for constructing a runtime safety filter, namely a \textit{shield}, that corrects unsafe control actions by substituting a verified backup policy whenever a nominal controller would violate the safe set boundary. However, this increased rigor comes at a high computational cost, as the underlying HJ partial differential equations suffer from the well-known curse of dimensionality, making exact solutions intractable for high-dimensional systems.
\begin{figure*}[!t]
    \centering
    \includegraphics[width=\linewidth]{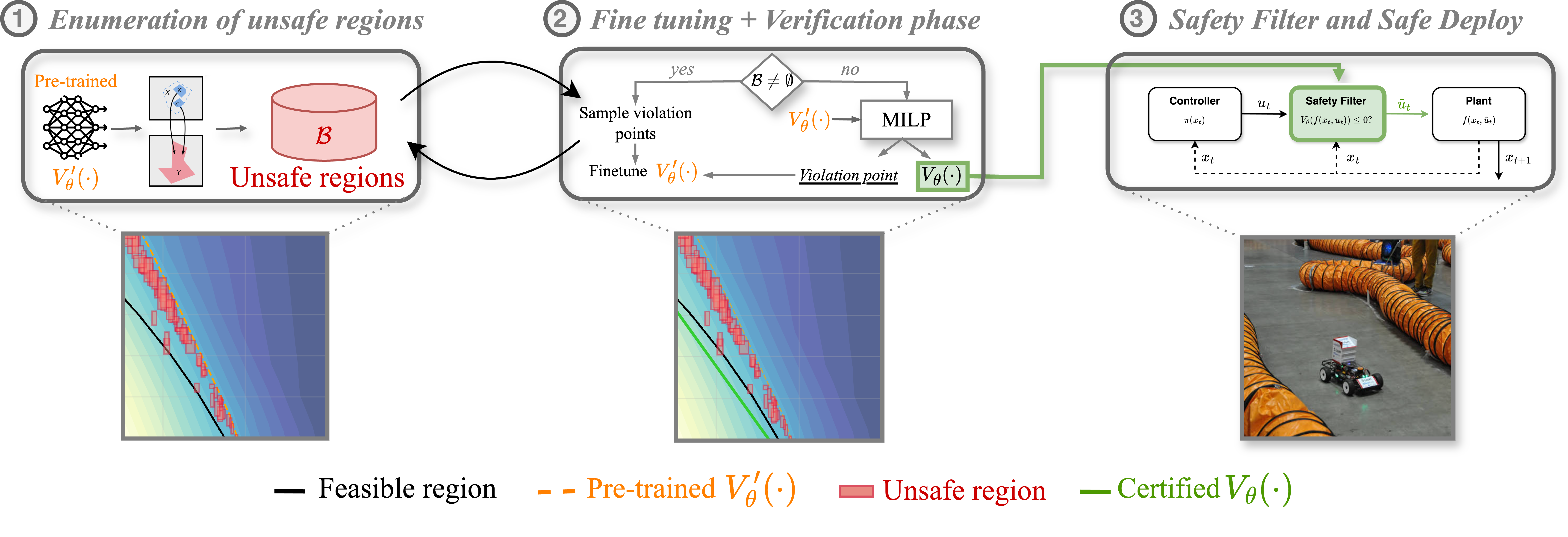}
    \caption{Overview of the proposed enumeration-guided training and online safety filter pipeline.}
    \label{fig:overview}
\end{figure*}





To address this limitation, several recent works have proposed neural HJ certificates, leveraging function approximation to scale reachability analysis to more complex systems~\cite{NeuralHJ1,NeuralHJ2}. Nevertheless, learned certificates do not inherently guarantee that the resulting zero-sublevel set defines a sound feasible region and therefore require formal verification to confirm that safety guarantees are preserved. This limitation motivated the approach of Yang. et al.~\cite{YangHWLL25}, which integrates neural value function learning within a training loop guided by adversarial search and formal verification. While this approach improves scalability and enables learning of provably safe certificates, it faces three critical challenges that we aim to address in this work. 
First, the adversarial procedure is empirical and relies on sparse, locally generated counterexamples as gradient-based attacks explore only a narrow neighborhood of the current iterate and may therefore miss failures located in other regions of the state space. As a result, the search may report no violation even when unsafe states remain in the fine-tuned certificate, not because these states are genuinely safe, but because the search did not reach them. This false negative prematurely triggers the formal verifier, which, being NP-hard~\cite{reluplex}, can incur substantial computational cost. The verifier then identifies the violation that the adversarial search missed, generates additional counterexamples, and initiates another fine-tuning cycle. This process can repeat over many iterations, leading to significant verification overhead before convergence. The inefficiency becomes particularly acute in high-dimensional systems, where the local coverage of adversarial search can be small relative to the global extent of the failure set. Second, even after a verified certificate has been obtained, the question of how to exploit it as a practical runtime safety filter for a learned controller operating in a complex environment remains unaddressed: translating the certified feasible region into an effective shielding mechanism requires additional design choices that the learning procedure alone does not resolve.

This paper addresses both limitations by introducing a novel enumeration-guided training strategy depicted in Fig. \ref{fig:overview}. At the core of our approach is the enumeration of compact unsafe regions, which replaces purely local counterexample search with a more structured exploration of the state space based on preimages of the unsafe set in deep neural networks~\cite{CountingProve,KothaBKD023}. The underlying intuition is that isolated local violations provide only limited information about the geometry of the failure set, whereas counterexamples drawn from a globally organized, spatially distributed exploration better characterize its shape and extent, offering a richer training signal to the learning algorithm. Crucially, our enumeration-guided training is not a substitute for formal verification; it improves the quality, diversity, and coverage of the counterexample set during training, while verification retains its role as the provable safety certification. In addition to this new training strategy, and building on the least-restrictive safety framework of Fisac et al.~\cite{DBLP:journals/tac/FisacAZKGT19}, we introduce a shielding mechanism that leverages the refined neural HJ certificate to enforce safety at deployment time. The proposed shield acts as a safety filter between a nominal controller and the plant: at each time step, it checks whether the intended action preserves forward invariance of the certified safe set and, if necessary, overrides it with a safe recovery action provided by a backup policy based on Linear Quadratic Regulation (LQR)~\cite{DBLP:journals/automatica/BemporadMDP02}. Unlike existing HJ-based safety filters, which typically rely on grid-based value functions or learned approximations without formal safety guarantees, our approach introduces a formally verified neural HJ certificate that, by construction, ensures forward invariance of the certified safe set under the verified system dynamics. This enables the deployment of formally certified safety filters in complex systems where closed-form safety certificates are unavailable.

Our empirical evaluation spans four safe-control verification benchmarks from~\cite{YangHWLL25}, covering systems with both linear and nonlinear dynamics, as well as a new obstacle-avoidance benchmark based on the RoboRacer platform (formerly F1TENTH)~\cite{OKelly2020F1TENTH}. Enumeration-guided training yields faster convergence times and, in general, larger certified regions across all benchmarks, with the largest gains on higher-dimensional and more complex tasks. The verified RoboRacer certificate is first evaluated in a realistic, physics-based Isaac Sim environment to assess its robustness against different track geometries, corridors, and heading disturbances. We further validate its effectiveness under real-world conditions by deploying it on a physical RoboRacer platform on an unseen real racing circuit. This deployment confirms the effectiveness of the proposed solution for complex safety-critical tasks.

\section{Background}

\subsection{Neural Hamilton-Jacobi Certification}
To synthesize formally verified neural approximations of Hamilton-Jacobi value functions, we follow the verification-guided learning pipeline introduced in~\cite{YangHWLL25}. 

\begin{definition}[Closed-loop system and safety specification]
Let $x \in \mathcal{X} \subseteq \mathbb{R}^n$ denote the system state. We consider a discrete-time closed-loop dynamical system
\begin{equation}
    x_{t+1} = f^\pi(x_t),
\end{equation}

where $f^\pi : \mathcal{X} \to \mathcal{X}$ represents the system dynamics under a fixed control policy $\pi$. Let $h : \mathcal{X} \to \mathbb{R}$ be a safety function, and define the safe set as
\begin{equation}
    \mathcal{S} := \{ x \in \mathcal{X} \mid h(x) \leq 0 \}
\end{equation}

\end{definition}

The objective is to learn a function whose zero-sublevel set characterizes a certified safe region of the state space.

\begin{definition}[Neural value function and zero-sublevel set]
Let $V_\theta : \mathcal{X} \to \mathbb{R}$ be a neural network parameterized by $\theta$. The associated zero-sublevel set is defined as
\begin{equation}
    \mathcal{C}_\theta := \{ x \in \mathcal{X} \mid V_\theta(x) \leq 0 \}
\end{equation}

\end{definition}

The goal is to synthesize $V_\theta$ such that $\mathcal{C}_\theta$ is a \emph{feasible region}, i.e., a subset of the safe set that is forward invariant under the closed-loop dynamics. Formally, this requires the following properties to hold for all $x \in \mathcal{X}$:
\begin{align}
    V_\theta(x) \leq 0 &\;\Rightarrow\; h(x) \leq 0, 
    &&\text{\small(constraint satisfaction)} \label{cs}\\
    V_\theta(x) \leq 0 &\;\Rightarrow\; V_\theta(f^\pi(x)) \leq 0, 
    &&\text{\small(forward invariance)} \label{fi}
\end{align}

The synthesis pipeline proceeds as follows. First, neural models of the closed-loop dynamics $f^\pi$ and the safety function $h$ are learned from sampled data. These models are then used to initialize and pre-train the value network $V_\theta$ \cite{YangHWLL25}. 
During pre-training, the value network is calibrated to shrink the initial zero-sublevel set, producing a conservative candidate region that reduces the burden of formal verification in later stages. Subsequently, the model is refined through a verification-guided fine-tuning process. Specifically, a projected gradient descent (PGD)-based counterexample search \cite{PGD} identifies violations of~\eqref{cs}--\eqref{fi}, while periodic mixed-integer linear programming (MILP) verification~\cite{MILP} attempts to formally certify the candidate set. While this pipeline enables the achievement of a formally verified HJ safety certificate, it may still suffer from limitations due to its reliance on sparse counterexamples, which provide only localized feedback to the learning process. In contrast, counterexamples obtained through more global and spatially structured exploration can better capture the failure set geometry, thereby offering richer guidance for refining the value function and enabling potentially easier post hoc verification.

\subsection{Neural Network Preimage Bounds Computation}
\label{sec:preimage}

Verifying properties~\eqref{cs} and~\eqref{fi} of a neural certificate $V_\theta$ amounts to characterizing the set of inputs that a deep neural network maps into a specified output region, a problem known as \emph{preimage computation}. Formally, given a network $f\colon\mathbb{R}^N\to\mathbb{R}$, an input domain $\mathcal{X}\subseteq\mathbb{R}^N$, and an output set $\mathcal{Y}\subseteq\mathbb{R}$, which together define a verification query $\mathcal{T} = \langle f,\mathcal{X},\mathcal{Y}\rangle$, the neural network preimage bounds computation, as known as \emph{AllDNN-Verification} problem~\cite{CountingProve,KothaBKD023,DBLP:conf/cav/WeiHMYNLL25}, asks for the \emph{exact} preimage set
$\Gamma(\mathcal{T}) \;=\; \bigl\{\,x \in \mathcal{X} \;\big|\; f(x) \in \mathcal{Y}\,\bigr\}$.
In our setting, the relevant query concerns the \emph{violation set} of
the current certificate, $\Gamma_\theta := \{x\in\mathcal{X} \mid V_\theta(x)\leq 0 \,\wedge\,
\max\{h(x),\, V_\theta(f^\pi(x))\} > 0\},$
i.e., the states at which~\eqref{cs} or~\eqref{fi} fails, so that
$V_\theta$ satisfies~\eqref{cs}--\eqref{fi} if and only if
$\Gamma_\theta=\emptyset$.
However, computing this preimage set exactly is \#P-hard~\cite{CountingProve} and is therefore computationally infeasible in general under standard complexity assumptions. To circumvent this barrier, recent works have introduced \emph{approximate} and/or \emph{probabilistic} preimage representations~\cite{eprove,premap,RFProve,DBLP:conf/ijcai/MarzariBM26}. Rather than recovering $\Gamma(\mathcal{T})$ exactly, these methods construct a finite collection $\mathcal{R} = \{r_1,\ldots,r_m\}$ of polytopes that provides a user-specified coverage ratio $c\in(0,1]$, with a prescribed confidence level in case of probabilistic approaches. In particular, with the stated confidence, the representation satisfies
$\bigcup_{i=1}^{m} r_i \subseteq \Gamma(\mathcal{T}), \text{ with } \frac{\mathrm{Vol}\left(\bigcup_{i=1}^{m} r_i\right)}
{\mathrm{Vol}\left(\Gamma(\mathcal{T})\right)}
\geq c.$

\begin{figure}[!t]
    \centering
    \includegraphics[width=0.8\linewidth]{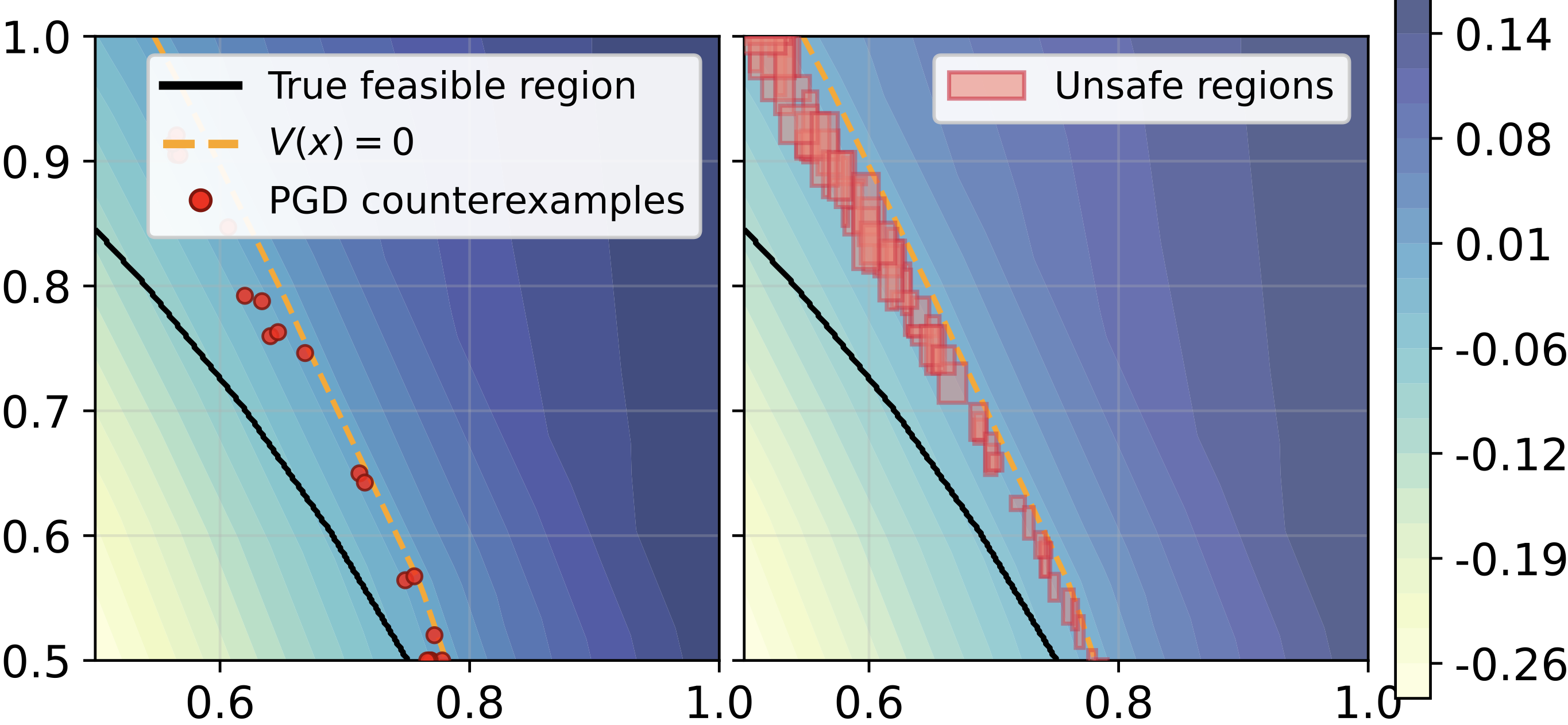}
    \caption{Comparison between PGD attack and approximate enumeration of unsafe regions (coverage $c=95\%$) for a pretrained $V(x)$ on a toy example.}
    \label{fig:enum}
\end{figure}

In the next section, we show how an enumeration-based strategy inspired by these approximate preimage methods can be embedded directly into the verification-guided training loop for neural HJ certificates. The resulting procedure replaces sparse adversarial search with a more structured exploration of the boundary of $\mathcal{C}_\theta$ (as shown in Fig. \ref{fig:enum}), while providing probabilistic coverage guarantees and thereby reducing the number of verification-guided training iterations.
Importantly, the enumeration only needs to \emph{cover} the violation
set, not to be contained in it: every state sampled from $\mathcal{R}$ is
checked exactly against~\eqref{cs}--\eqref{fi} before being used for
training, so regions that include non-violating states only reduce sampling efficiency, whereas missed regions would leave violations unaddressed. We also stress that the enumeration step is used only to guide training, not to establish the final safety guarantee: the fine-tuned certificate is still subjected to formal verification, which provides the final provable certification.

\section{Methodology}
\label{sec:methodology}

\subsection{Enumeration-guided training}
While the overall verification objective remains unchanged from that proposed in \cite{YangHWLL25}, namely enforcing constraint satisfaction and forward invariance of the zero-sublevel set of $V_\theta$, we strengthen the search for violations by introducing an explicit enumeration phase that enables more global exploration of the state space. At a high level, instead of relying solely on sparse counterexamples, we proactively identify regions of the state space that are likely to violate either condition~\eqref{cs} or~\eqref{fi}. Given the current value function $V_\theta$, this amounts to characterizing subsets of the failure set, i.e., states $x \in \mathcal{X}$ such that either $V_\theta(x) \leq 0 \land h(x) > 0$ or $V_\theta(x) \leq 0 \land V_\theta(f^\pi(x)) > 0$. However, as previously discussed, precisely identifying these regions is computationally intractable: even for small piecewise-linear neural networks, enumerating the preimages of such conditions requires reasoning over an exponential number of activation patterns, making the problem \#P-hard \cite{CountingProve}.
To address this limitation, we adopt a probabilistic enumeration strategy inspired by~\cite{RFProve}, approximating the failure set up to a desired coverage $c$ via randomized yet structured exploration of $\mathcal{X}$. Specifically, we construct an ensemble of randomized decision trees trained on bootstrapped samples of the state space, where each sample is labeled according to whether it violates constraint satisfaction or forward invariance under the current value function $V_\theta$. Each tree partitions the input space into axis-aligned regions, providing a local approximation of the failure set. By combining predictions from the entire ensemble, we detect regions that are consistently classified as violating and that together cover at least a fraction $c$ of the true unsafe volume with high confidence and bounded error \cite{RFProve}. Notably, these regions form a spatially coherent approximation of the failure set, which can be efficiently sampled to extract informative counterexamples. The resulting samples are stored in a buffer, which is used to fine-tune the value network. This process enables the learning algorithm to exploit a richer and more structured representation of violations, capturing the geometry of the failure set rather than isolated points. When the enumerated regions no longer provide sufficient valid counterexamples, we resort to a neighborhood-based fallback strategy that locally perturbs previously identified violating states, densifying the exploration around the critical boundary regions of the current zero-sublevel set $\mathcal{C}_\theta$. If no new counterexamples are found for a prescribed number of iterations, we invoke the MILP verifier to assess whether the current $V_\theta$ candidate satisfies properties~\eqref{cs}--\eqref{fi} over the entire domain $\mathcal{X}$. If certification succeeds, the procedure terminates. If a counterexample is returned, it is added to the buffer and the refinement process resumes. If the verifier is inconclusive, training proceeds without certification. 

\begin{algorithm}[!t]
\caption{Enumeration-guided fine-tuning with MILP}
\label{alg:enum_finetune}
\begin{algorithmic}[1]
\small
\Require $V, f^\pi, h, \mathcal{X}, K, \mathcal{B}_{min}, f_{\mathrm{enum}}, n, n_{fb}, c, s_{\max}$
\Ensure Fine-tuned value network $V$

\State $\mathcal{R} \gets \emptyset,\; \mathcal{B} \gets \emptyset,\; s \gets 0$

\For{$k = 1$ to $K$}
    \If{$k = 1$ or $k\;\%\:f_{\mathrm{enum}} = 0$}
        \State $\mathcal{R} \gets \texttt{EnumUnsafe}(V,h,f^\pi,\mathcal{X}, c)$
    \EndIf

    \If{$\mathcal{R}\neq\emptyset$}
        \State $\mathcal{B} \gets \texttt{SampleRegions}(\mathcal{R},n)$
    \EndIf

    \If{$\mathcal{B}=\emptyset$ or $|\mathcal{B}|< \mathcal{B}_{min}$}
        \State $\mathcal{B} \gets \mathcal{B} \cup \texttt{NeighbourFallback}(V,h,f^\pi,\mathcal{X}, n_{fb})$
    \EndIf

    \If{$\mathcal{B}\neq\emptyset$}
        \State $X \gets \texttt{FilterCE}(\mathcal{B},V,h,f^\pi)$
        \State $V \gets \texttt{Update}(V,X),\; s \gets 0$
    \Else \State $s \gets s + 1$
    \EndIf

    \If{$s \geq s_{\max}$}
        \State $(r,x_{\mathrm{ce}}) \gets \texttt{MILP}(V,h,f^\pi,\mathcal{X})$
        \If{$r = Certified$}
            \State \Return $V$
        \ElsIf{$r = Counterexample$}
            \State $\mathcal{B} \gets \mathcal{B} \cup \{x_{\mathrm{ce}}\},\; s \gets 0$
        \EndIf
    \EndIf
\EndFor
\State \Return $V$
\end{algorithmic}
\end{algorithm}

Algorithm~\ref{alg:enum_finetune} formalizes the proposed procedure. It takes as input an initial pretrained value network $V_\theta$, the learned closed-loop dynamics $f^\pi$, the safety function $h$, the state space $\mathcal{X}$, and enumeration with verification parameters. At each of the $K$ max iterations, the algorithm proceeds as follows. First, every $f_{\mathrm{enum}}$ steps, it recomputes candidate unsafe regions via $\texttt{EnumUnsafe}$, implemented using the probabilistic exploration described above. Then, if the buffer is insufficiently populated (i.e., fewer than $\mathcal{B}_{min}$ samples), new $n$ samples are drawn uniformly from these regions and filtered to retain only valid counterexamples. If no such samples are available, a neighborhood-based fallback search is performed, sampling $n_{fb}$ new samples, and whenever the buffer contains valid violations, the $V$ is updated. Otherwise, a counter is incremented, and when it exceeds $s_{\max}$, formal verification is triggered to either certify the current $V_\theta$ candidate or provide additional counterexamples.
By combining probabilistic global exploration with local refinement, the proposed method provides more informative supervision to guide the optimization of $V_\theta$ and enables faster verification of the final certificate.

\subsection{Runtime safety filter via neural HJ certificates}
Once a certificate $V_\theta$ satisfying~\eqref{cs}--\eqref{fi} has been
obtained, it can be deployed as a \emph{safety filter}, or \emph{shield}. Specifically, our filter wraps an arbitrary nominal policy $\pi_{\mathrm{nom}}$ and modifies its behavior only when necessary. Following~\cite{YangHWLL25}, verification is performed for a fixed backup policy $\pi_{\mathrm{b}}$ on a ReLU network approximation
$\hat f^{\pi_{\mathrm{b}}}$ of the closed-loop dynamics
$f^{\pi_{\mathrm{b}}}(x)=f(x,\pi_{\mathrm{b}}(x))$, i.e.,
\eqref{cs}--\eqref{fi} hold for $\hat f^{\pi_{\mathrm{b}}}$. As depicted in
Fig.~\ref{fig:safety_filter}, at each time step $t$ the system observes
$x_t\in\mathcal{C}_\theta$, queries the nominal action
$u_t=\pi_{\mathrm{nom}}(x_t)$, and performs a one-step lookahead with the
known dynamics $f$:
\begin{equation}
\tilde{u}_t =
\begin{cases}
u_t, & \text{if } V_\theta\!\left(f(x_t,u_t)\right) \leq 0,\\
\pi_{\mathrm{b}}(x_t), & \text{otherwise}.
\end{cases}
\label{eq:safety_filter}
\end{equation}
The nominal action is thus executed whenever its successor remains in
$\mathcal{C}_\theta$, and is otherwise replaced by the backup action.
Unlike prior HJ-based filters, which rely on value functions either
computed on grids, limiting them to low-dimensional systems
\cite{DBLP:journals/tac/FisacAZKGT19}, or learned without formal
guarantees \cite{DBLP:journals/trob/BorquezCWB24}, the invariance of
$\mathcal{C}_\theta$ under $\pi_{\mathrm{b}}$ is here formally verified,
which yields the following guarantee.

\begin{proposition}
Suppose $V_\theta$ satisfies~\eqref{cs}--\eqref{fi} for
$\hat f^{\pi_{\mathrm{b}}}$, $f^{\pi_{\mathrm{b}}}(x)=\hat f^{\pi_{\mathrm{b}}}(x)$
for all $x\in\mathcal{C}_\theta$, and $x_0\in\mathcal{C}_\theta$. Then the
system $x_{t+1}=f(x_t,\tilde u_t)$, with $\tilde u_t$ given
by~\eqref{eq:safety_filter}, satisfies
$x_t\in\mathcal{C}_\theta\subseteq\mathcal{S}$ for all $t\geq 0$.
\end{proposition}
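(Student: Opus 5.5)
The plan is to prove $x_t\in\mathcal{C}_\theta$ for all $t\ge 0$ by induction on $t$, and then read off $\mathcal{C}_\theta\subseteq\mathcal{S}$ directly from constraint satisfaction~\eqref{cs}. The inclusion is immediate: if $x\in\mathcal{C}_\theta$, then $V_\theta(x)\le 0$, so~\eqref{cs} gives $h(x)\le 0$, i.e.\ $x\in\mathcal{S}$. Note that~\eqref{cs} involves only $V_\theta$ and $h$, so it holds independently of which dynamics model was used in verification.

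The base case $x_0\in\mathcal{C}_\theta$ is an assumption. For the inductive step, suppose $x_t\in\mathcal{C}_\theta$ and split according to the two branches of~\eqref{eq:safety_filter}.

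\emph{Case 1: the nominal action is kept.} Here $V_\theta(f(x_t,u_t))\le 0$ and $\tilde u_t=u_t$. Hence $x_{t+1}=f(x_t,u_t)$, and the branch condition itself says $V_\theta(x_{t+1})\le 0$, so $x_{t+1}\in\mathcal{C}_\theta$. The one-step lookahead uses the true dynamics $f$, so nothing about the network model is needed in this case.

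\emph{Case 2: the backup action is used.} Here $\tilde u_t=\pi_{\mathrm{b}}(x_t)$, so $x_{t+1}=f(x_t,\pi_{\mathrm{b}}(x_t))=f^{\pi_{\mathrm{b}}}(x_t)$. Since $x_t\in\mathcal{C}_\theta$, the agreement hypothesis gives $f^{\pi_{\mathrm{b}}}(x_t)=\hat f^{\pi_{\mathrm{b}}}(x_t)$. Forward invariance~\eqref{fi}, verified for $\hat f^{\pi_{\mathrm{b}}}$, together with $V_\theta(x_t)\le 0$, then gives $V_\theta(\hat f^{\pi_{\mathrm{b}}}(x_t))\le 0$. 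Therefore $x_{t+1}\in\mathcal{C}_\theta$.

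The only real subtlety is in Case 2: the verified invariance property concerns the surrogate $\hat f^{\pi_{\mathrm{b}}}$, while the plant evolves under $f$. The agreement hypothesis on $\mathcal{C}_\theta$ is exactly what transfers the guarantee, and it applies because the induction hypothesis places $x_t$ in $\mathcal{C}_\theta$. Both cases give $x_{t+1}\in\mathcal{C}_\theta$, which completes the induction. Combining this with $\mathcal{C}_\theta\subseteq\mathcal{S}$ yields $x_t\in\mathcal{C}_\theta\subseteq\mathcal{S}$ for all $t\ge 0$.
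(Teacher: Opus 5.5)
Your proof is correct and matches the paper's own argument: induction on $t$ with a case split on the two branches of~\eqref{eq:safety_filter}. The lookahead check handles the nominal branch, the agreement hypothesis on $\mathcal{C}_\theta$ together with~\eqref{fi} for $\hat f^{\pi_{\mathrm{b}}}$ handles the backup branch, and~\eqref{cs} gives $\mathcal{C}_\theta\subseteq\mathcal{S}$.
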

\begin{proof}
By induction on $t$. Let $x_t\in\mathcal{C}_\theta$. If the nominal action
is retained, $x_{t+1}\in\mathcal{C}_\theta$ by the check
in~\eqref{eq:safety_filter}; otherwise
$x_{t+1}=f^{\pi_{\mathrm{b}}}(x_t)=\hat f^{\pi_{\mathrm{b}}}(x_t)\in\mathcal{C}_\theta$
by assumption and~\eqref{fi}. In both cases, $x_{t+1}\in\mathcal{S}$ by~\eqref{cs}.
\end{proof}

\begin{figure}[!t]
    \centering
    \includegraphics[width=0.8\linewidth]{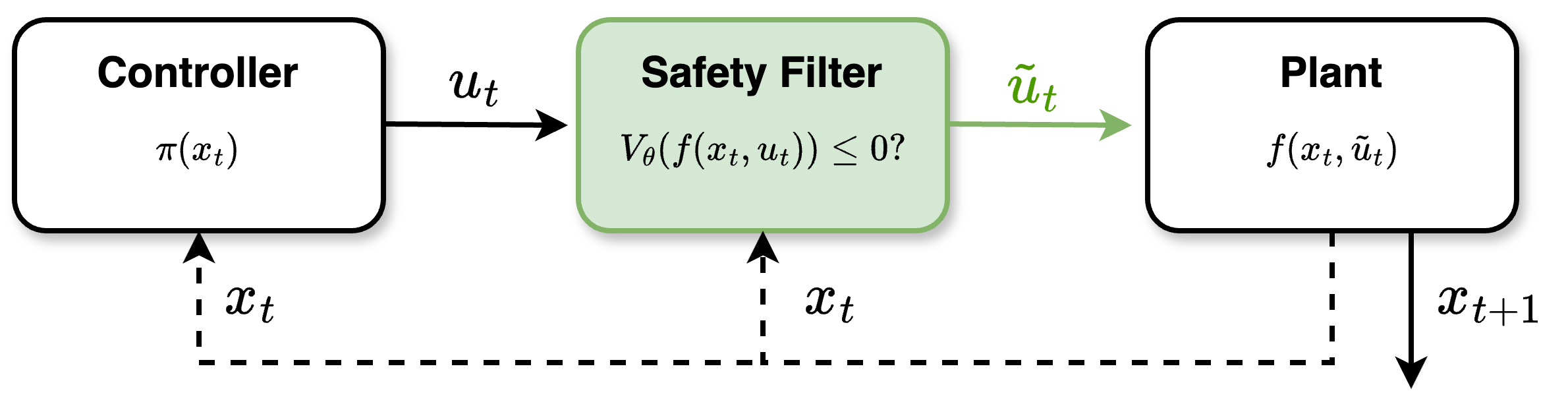}
    \caption{Block diagram of the safety filter via neural HJ certificates.}
    \label{fig:safety_filter}
\end{figure}

Both the safety check and the backup action require only forward
evaluations of $f$, $V_\theta$, and $\pi_{\mathrm{b}}$, enabling online
execution without iterative optimization. In our setting, $\pi_{\mathrm{b}}$ is a saturated LQR controller represented by a fixed ReLU network. Since the certificate is learned rather than derived analytically, the shield applies to systems for which closed-form safety certificates are unavailable. The exactness assumption reflects the verification setting
of~\cite{YangHWLL25}; since the nominal branch is checked with the known
dynamics $f$, approximation errors of $\hat f^{\pi_{\mathrm{b}}}$ affect
only the backup branch and could be absorbed by verifying~\eqref{fi} with
a Lipschitz-based margin.

\section{Empirical Evaluation}\label{sec:empirical_evaluation}

To validate the proposed enumeration-guided refinement strategy, we conducted a comprehensive evaluation across simulated and real-world scenarios. Specifically, we investigate three research questions: (RQ1) \textit{How does enumeration-guided HJ training affect the quality of the resulting safety certificate?} (RQ2) \textit{Can our approach improve training efficiency by reducing the wall-clock time required to obtain a provably verified certificate?} and (RQ3) \textit{Can the verified neural HJ certificate be effectively deployed as a runtime safety filter to enforce safety in a realistic safety-critical task?}

To answer these questions, for each task described below, we compare the state-of-the-art fine-tuning strategy of~\cite{YangHWLL25}, which uses a PGD attack to generate counterexamples followed by formal verification of the neural HJ certificate using the MILP verifier~\cite{MILP}, with our proposed enumeration-guided refinement strategy. We refer to these approaches as \texttt{PGD+MILP} and \texttt{Enum+MILP}, respectively.
We evaluate the two approaches using three key metrics. First, the \emph{true feasible rate} (TFR)~\cite{YangHWLL25} measures the
fraction of truly feasible states contained in $\mathcal{C}_\theta$,
thereby quantifying the quality of the certified region. It is estimated
on $100{,}000$ states sampled uniformly from $\mathcal{X}$, where a state
is deemed truly feasible if the closed-loop trajectory under
$\pi_{\mathrm{b}}$ starting from it incurs no constraint violation over
100 steps. 
Second, we measure the number of MILP calls and their corresponding times. For all methods, we set a fixed timeout of 150 min.
The neural HJ certificate $V_\theta$, whose zero-sublevel set $\mathcal{C}=\{x:V_\theta(x)\leq0\}$ defines the certified safe region, is represented by a neural network with two hidden layers, each containing 32 ReLU units. Under the fixed backup policy, a MILP verifier \cite{MILP} exhaustively verifies both constraint satisfaction and one-step forward invariance, i.e., conditions~\eqref{cs}--\eqref{fi}.

To account for the variability introduced by neural network initialization, we train five certificates from three independent random initializations for each method and task. Hence, we report the mean and standard deviation across these three runs, providing a more robust assessment of the performance of the compared refinement strategies. Furthermore, we note that both the baseline approach of~\cite{YangHWLL25} and our proposed enumeration-guided strategy are sensitive to hyperparameter choices. Hence, we tune each method independently using CARBS~\cite{CARBS}
under the same protocol and budget, maximizing TFR with the number of MILP calls as the cost metric; wall-clock and final verification times are recorded separately and not optimized. The experiments were conducted on AMD EPYC 9355 CPUs and an NVIDIA RTX~3090 GPU. The code and final configurations used to collect the results are available on \href{https://github.com/lmarza/CersyvePython}{\underline{github}}.

Our intuition is that our enumeration-guided fine-tuning strategy enables larger certificates while also reducing the complexity and thus computational time of the post-hoc verification. 

\textbf{Evaluation tasks.} Our empirical evaluation spans four systems from the \textit{Cersyve} verification benchmark for safe control~\cite{YangHWLL25}, together with a new obstacle-avoidance scenario, \textit{RoboRacer}, based on the F1TENTH-scale autonomous racing platform and simulation environment~\cite{OKelly2020F1TENTH}. Table~\ref{tab:benchmark_overview} summarizes the five systems considered in our experiments, while in the following, we provide the reader with a brief description of their dynamics and the corresponding safety specifications.

\begin{table}[!t]
\centering
\caption{Overview of the considered benchmarks.}
\label{tab:benchmark_overview}
\small
\setlength{\tabcolsep}{4pt}
\renewcommand{\arraystretch}{1.15}
\begin{tabular}{c c c c c}
\multicolumn{1}{c}{\textbf{Task}}
& \multicolumn{1}{c}{\shortstack[c]{\textbf{State}\\\textbf{dimension}}}
& \multicolumn{1}{c}{\shortstack[c]{\textbf{Control}\\\textbf{dimension}}}
& \multicolumn{1}{c}{\textbf{Dynamics}}
& \multicolumn{1}{c}{\textbf{Constraint}} \\
\hline
Pendulum  & 2 & 1 & Nonlinear & Linear \\
Unicycle  & 3 & 2 & Nonlinear & Nonlinear \\
RoboRacer & 4 & 1 & Linear & Linear \\
Cart Pole & 4 & 1 & Nonlinear & Linear \\
Robot Arm & 6 & 3 & Linear    & Nonlinear \\
\end{tabular}
\end{table}

\paragraph{Pendulum} The task considers stabilization of a pendulum about the upright equilibrium under angle constraints. The state is $x = [\theta,\dot{\theta}]^\top \in \mathbb{R}^2,$
where $\theta$ is the pendulum angle, and the control input is the applied torque $u \in \mathbb{R}$.
The benchmark dynamics are
\begin{equation}
\begin{aligned}
    \theta_{t+1} &= \theta_t + \dot{\theta}_{t+1}\Delta t, \\
    \dot{\theta}_{t+1} &= \dot{\theta}_t +
    \left(\frac{3g}{2l}\sin\theta_t + \frac{9u_t}{ml^2}\right)\Delta t,
\end{aligned}
\end{equation}
where $m$ is the pendulum mass, $l$ its length, $g$ the gravitational acceleration, and $\Delta t$ the sampling time. Safety is specified by the angular bounds $\theta_{\mathrm{lb}} \le \theta \le \theta_{\mathrm{ub}}$.

\paragraph{Unicycle} The task considers obstacle avoidance for a unicycle model. The state is $x = [v,x_{\mathrm{o}},y_{\mathrm{o}}]^\top \in \mathbb{R}^3,$ where $v$ denotes the forward speed and $(x_{\mathrm{o}},y_{\mathrm{o}})$ denotes the obstacle position expressed in the unicycle frame. The control input is $u = [a,\omega]^\top \in \mathbb{R}^2,$where $a$ is the longitudinal acceleration and $\omega$ the angular velocity. The benchmark dynamics are
\begin{equation}
\begin{aligned}
    v_{t+1} &= v_t + a_t\Delta t, \\
    x_{\mathrm{o},t+1} &= (x_{\mathrm{o},t} - v_t\Delta t)\cos(\omega_t\Delta t)
    + y_{\mathrm{o},t}\sin(\omega_t\Delta t), \\
    y_{\mathrm{o},t+1} &= y_{\mathrm{o},t}\cos(\omega_t\Delta t)
    - (x_{\mathrm{o},t} - v_t\Delta t)\sin(\omega_t\Delta t).
\end{aligned}
\end{equation}
Safety is encoded through the obstacle-avoidance constraint $\sqrt{x_{\mathrm{o}}^2 + y_{\mathrm{o}}^2} \ge r_{\mathrm{o}},$ where $r_{\mathrm{o}}$ is the obstacle radius.

\paragraph{Cart Pole} The task requires balancing an inverted pendulum mounted on a cart while keeping both the cart position and the pole angle within prescribed bounds. The state is $x = [y,\theta,\dot{y},\dot{\theta}]^\top \in \mathbb{R}^4,$ where $y$ is the cart position and $\theta$ is the pole angle. The control input is the horizontal force $u = F \in \mathbb{R}.$
The benchmark dynamics are
\begin{equation}
\begin{aligned}
    y_{t+1} &= y_t + \dot{y}_t\Delta t, \quad
    \theta_{t+1} = \theta_t + \dot{\theta}_t\Delta t, \\
    \dot{y}_{t+1} &= \dot{y}_t +
    \frac{F_t + ml\dot{\theta}_t^2\sin\theta_t - ml\ddot{\theta}_t\cos\theta_t}{M}\Delta t, \\
    \dot{\theta}_{t+1} &= \dot{\theta}_t + \ddot{\theta}_t\Delta t,
\end{aligned}
\end{equation}
with
\begin{equation}
    \ddot{\theta}_t =
    \frac{3Mg\sin\theta_t - 3\left(F_t + ml\dot{\theta}_t^2\sin\theta_t\right)\cos\theta_t}
    {(4M - 3m\cos^2\theta_t)l},
\end{equation}
where $m$ is the pole mass, $M$ is the total mass of pole and cart, and $l$ is the pole length. Safety is specified by
\[
y_{\mathrm{lb}} \le y \le y_{\mathrm{ub}},
\qquad
\theta_{\mathrm{lb}} \le \theta \le \theta_{\mathrm{ub}}.
\]

\paragraph{Robot Arm} The task considers a planar three-link manipulator that must reach a target while avoiding collision with a wall in front of the arm. The state is $x = [\theta_1,\theta_2,\theta_3,\dot{\theta}_1,\dot{\theta}_2,\dot{\theta}_3]^\top \in \mathbb{R}^6,$
where $\theta_1$ is the first-joint angle and $\theta_2,\theta_3$ are incremental joint angles relative to the preceding joints. The control input is
$u = [\ddot{\theta}_1,\ddot{\theta}_2,\ddot{\theta}_3]^\top \in \mathbb{R}^3,$ which represents the joint angular accelerations. The benchmark dynamics are obtained via forward-Euler discretization, and we refer the reader to the original paper \cite{YangHWLL25} for their explicit formulation. Safety is encoded by constraining the horizontal projection of the arm,
\begin{equation}
    \sum_{i=1}^{3}\cos\!\left(\sum_{j=1}^{i}\theta_j\right) \le 1.5,
\end{equation}
which prevents collision with the front wall.

\paragraph{RoboRacer overtaking}
\label{sec:roboracer}
The task considers path following and obstacle avoidance for a
1:10-scale autonomous car. The certificate state is
$x=[e_y,e_\psi,y_{\mathrm{left}},y_{\mathrm{right}}]^\top$,
where $e_y$ and $e_\psi$ denote the lateral and heading errors
relative to the path, and $y_{\mathrm{left}}$ and
$y_{\mathrm{right}}$ are the available collision-free room for the
vehicle center on either side, accounting for the vehicle footprint.
We use a curvature-compensated, small-angle approximation of the
kinematic bicycle model~\cite{Rajamani2012} in path coordinates,
with $e_y'=e_\psi$ and $e_\psi'=u$, where primes denote derivatives
with respect to vehicle arc length. The control input $u$ is the
residual curvature, constrained by $|u|\le0.31\,\mathrm{m}^{-1}$.
The model is discretized with $\Delta s=0.05\,\mathrm{m}$, and
the corridor widths are held constant in the verified model.
Safety requires
\begin{equation}
    -y_{\mathrm{right}}\le e_y\le y_{\mathrm{left}},
    \qquad |e_\psi|\le0.40.
    \label{eq:roboracer_safety}
\end{equation}

\subsection{Answer to RQ1 and RQ2 - Simulation results}
Table~\ref{tab:benchmarks_by_task} reports our simulation results. In general, the proposed enumeration-guided fine-tuning strategy enables larger TFR certificates with fewer verification calls and less computational time. This is due to the fact that, unlike PGD, which returns individual adversarial states and thus provides sparse, pointwise supervision, our enumerator identifies entire unsafe regions from which multiple violations can be sampled during fine-tuning. This exposes the value network to a broader portion of the failure set at each refinement step and reduces the risk of repeatedly correcting isolated counterexamples while nearby violations remain unresolved. 
Nonetheless, CartPole results clarify the current limitation of our enumeration approach. Its nonlinear dynamics induce curved boundaries (as shown in Fig. \ref{fig:pendulum}) that are only coarsely represented by the axis-aligned hyperrectangles used by our enumerator. Consequently, the returned regions may contain substantial portions that are uninformative for refinement, reducing the efficiency of the region-level supervision and leading to a smaller final certified region. This limitation concerns the particular geometric abstraction adopted in our implementation, rather than the enumeration-guided refinement principle itself, which is agnostic to the representation used to characterize unsafe regions.
\begin{figure}[!h]
   \centering
   \includegraphics[width=0.45\linewidth]{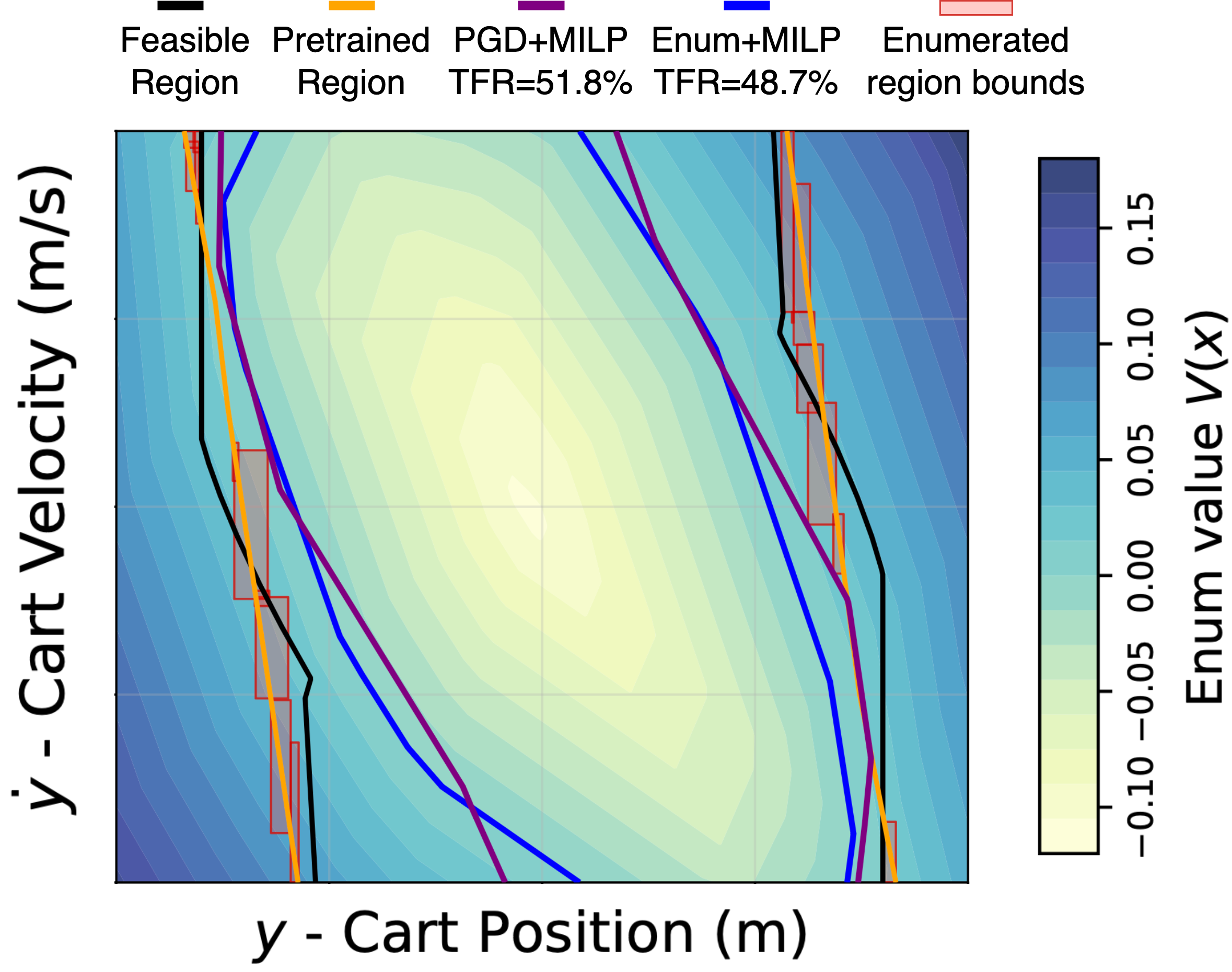}
   \caption{Comparison of best-seed certified regions for \texttt{PGD+MILP} and \texttt{Enum+MILP} on the CartPole task.}
   \label{fig:pendulum}
\end{figure}

To assess the impact of the fine-tuning strategy on the final certification step, Table~\ref{tab:benchmarks_by_task} reports the corresponding MILP workload. The experimental results support our intuition that enumeration-guided fine-tuning not only improves the quality of the learned HJ certificates but also produces certificates that are easier to verify. In fact, \texttt{Enum+MILP} consistently requires fewer or an equal number of MILP calls and achieves substantially lower overall wall-clock times across all benchmarks. Moreover, the final MILP verification time is reduced in most cases. The only exception is Robot Arm, where \texttt{Enum+MILP} certifies a substantially larger feasible region ($69.87\%$ vs.\ $49.02\%$ TFR), potentially increasing the complexity of the final verification step. Overall, these findings suggest that enumeration-guided fine-tuning improves not only the quality of the resulting certificates but also their computational verifiability.

\begin{table*}[!t]
\centering
\caption{PGD+MILP vs.\ Enum+MILP on Cersyve \cite{YangHWLL25} and RobotRacer benchmarks. Results are averaged over three random seeds.} 
\label{tab:benchmarks_by_task}
\renewcommand{\arraystretch}{1.16}
\setlength{\tabcolsep}{2.2pt}
\scriptsize
\resizebox{\textwidth}{!}{%
\begin{tabular}{lccc||ccc}
& \multicolumn{3}{c||}{\texttt{PGD+MILP} \cite{YangHWLL25}}
& \multicolumn{3}{c}{\texttt{Enum+MILP} (Ours)} \\
\cline{1-7}
\textit{Task}
& TFR
& \#MILP Calls -- Time (s)
& Wall-clock time (s)
& TFR
& \#MILP Calls -- Time (s)
& Wall-clock time (s) \\
\hline
\textit{Pendulum}
& $96.94\pm0.15$
& $1.0\pm0.0 - 4.2\pm0.1$
& $219\pm39$
& $\mathbf{97.39\pm0.35}$
& $\mathbf{1.0\pm0.0 - 2.6\pm0.5}$
& $\mathbf{108\pm32}$ \\

\textit{Unicycle}
& $94.26\pm1.15$
& $1.3\pm0.6 - 77.0\pm18.8$
& $5793\pm6569$
& $\mathbf{94.59\pm0.73}$
& $\mathbf{1.0\pm0.0 - 52.1\pm3.5}$
& $\mathbf{1287\pm823}$ \\

\textit{RoboRacer}
& N/A
& -- 
& \text{Timeout}
& $\mathbf{61.62\pm10.47}$
& $\mathbf{6.33\pm1.15 - 8.99\pm1.70}$
& $\mathbf{1729\pm21.34}$ \\

\textit{Cart Pole}
& $\mathbf{45.94\pm4.70}$
& $4.67\pm2.52 - 22.33\pm26.07$
& $8326\pm1897$
& $45.02\pm4.81$
& $\mathbf{2.00\pm0.00 - 5.08\pm1.26}$
& $\mathbf{1970\pm1058}$ \\

\textit{Robot Arm}
& $49.02\pm8.29$
& $9.0\pm10.4 - 47.0\pm14.2$
& $7955\pm10340$
& $\mathbf{69.87\pm2.21}$
& $\mathbf{7.0\pm2.6 - 69.5\pm17.4}$
& $\mathbf{1732\pm404}$ \\
\hline
\end{tabular}%
}
\end{table*}

\subsection{Ablation study - On the impact of enumeration}

To assess the contribution of identifying compact representations of counterexamples in reducing the number of steps to convergence (i.e., \textit{iterations}), we introduce a \texttt{Random Sampling+MILP} baseline that replaces unsafe-region enumeration with uniform sampling. Specifically, at each fine-tuning iteration, the baseline samples $n$ states uniformly from $\mathcal{X}$ and retains only those violating \eqref{cs} or \eqref{fi}, following the same counterexample filtering procedure (FilterCE) as \texttt{Enum+MILP}. Both methods use identical buffer sizes, update rules, and MILP verification schedules, with a maximum budget of $5000$ iterations. Thus, they differ only in the strategy used to generate candidate counterexamples.

As shown in Table~\ref{tab:ablation_sampling}, \texttt{Enum+MILP} achieves higher TFR while reducing the number of iterations required to obtain a verified certificate. We evaluate this effect on two benchmarks of increasing complexity: Pendulum and Robot Arm. On Pendulum, \texttt{Enum+MILP} reaches verification in $1563\pm111$ iterations, compared to $1934\pm506$ for random sampling. The advantage becomes particularly pronounced on Robot Arm, where uniform sampling fails to obtain a verified certificate within the $5000$-iteration budget, whereas \texttt{Enum+MILP} successfully converges in $2129\pm81$ iterations, achieving a TFR of $69.14\pm0.60$. These findings demonstrate that explicitly identifying unsafe regions provides a more informative refinement signal than uniformly sampled counterexamples, accelerating convergence and improving the effectiveness of the fine-tuning process, particularly on more complex benchmarks.

\begin{table}[!h]
\centering
\caption{Ablation on the source of counterexamples.}
\label{tab:ablation_sampling}
\renewcommand{\arraystretch}{1.2}
\setlength{\tabcolsep}{4pt}
\begin{tabular}{lcc||cc}

& \multicolumn{2}{c||}{\texttt{Random Sampling+MILP}}
& \multicolumn{2}{c}{\texttt{Enum+MILP} (Ours)} \\
\cline{2-5}
\textit{Task} & TFR & \shortstack{Iterations}
& TFR & \shortstack{Iterations} \\
\hline
\textit{Pendulum}
& $96.75\pm0.30$
& $1934\pm506$
& $\mathbf{97.29\pm0.19}$ & $\mathbf{1563\pm111}$ \\

\textit{Robot Arm}
& N/A
& Timeout
& $\mathbf{69.14\pm0.60}$
& $\mathbf{2129\pm81}$ \\
\hline
\end{tabular}
\end{table}

Finally, we also investigate whether the benefits of enumeration-guided fine-tuning persist across verification backends and network sizes. We combine both fine-tuning strategies with MILP and with $\alpha,\beta$-CROWN, a state-of-the-art branch-and-bound verifier \cite{crown}, and increase the value-network size from two hidden layers of 32 ReLU units to two layers of 64 units. 

\begin{table}[!h]
\centering
\caption{Verification scalability on Pendulum Task with increasing value-network size 2x32 (first row), 2x64 (second row).}
\label{tab:verification_scalability}
\setlength{\tabcolsep}{4pt}
\renewcommand{\arraystretch}{1.1}
\begin{tabular}{ccccc}
& \multicolumn{2}{c}{\texttt{MILP}} 
& \multicolumn{2}{c}{\texttt{$\alpha,\beta$-CROWN}} \\
\hline
 Method 
& TFR & Iters.
& TFR & Iters. \\
\hline
\texttt{PGD}  &  $96.94\pm0.15$ & $2014\pm141$ & $97.04\pm0.21$ & $1866 \pm 384$ \\
\texttt{Enum} & $\mathbf{97.29\pm0.19}$ & $\mathbf{1563\pm111}$ & $\mathbf{97.25\pm 0.58}$ & $\mathbf{1704\pm 457}$ \\
\hline
 \texttt{PGD}  & $96.78\pm0.21$ & $965\pm277$ & $96.82\pm0.23$ & $906\pm195$ \\
 \texttt{Enum} & $\mathbf{97.90\pm0.12}$ & $\mathbf{496\pm50}$ & $\mathbf{97.45\pm0.29}$ & $\mathbf{609\pm139}$ \\
\hline
\end{tabular}
\end{table}

The results of this experiment show that our enumeration-guided fine-tuning is a valuable solution, enabling a verified certificate in fewer iterations under both MILP and $\alpha,\beta$-CROWN, and the advantage persists as the network size increases.



\subsection{Answer to RQ3 - HJ certificate deployment}

To assess the robustness of the verified HJ certificate under realistic operating conditions, we first evaluate it in a physics-based Isaac Sim environment integrated with ROS2, considering different track geometries, sensing uncertainties, and disturbances affecting the perceived free-space corridor and vehicle heading. Specifically, we introduce a $0.11$~s steering delay, modeled by propagating the previously issued control and evaluating the safety filter~\eqref{eq:safety_filter} at the predicted state. We deploy the verified certificate as a safety filter around a pure-pursuit nominal controller $\pi_{\mathrm{nom}}$~\cite{Coulter1992PurePursuit}, which tracks obstacle-avoidance paths generated by a planner within the available corridor. The backup policy $\pi_{\mathrm{b}}$ is implemented as an LQR that steers the vehicle toward the corridor center. When the predicted state falls outside $\mathcal{C}_\theta$ or the operating limits are exceeded, braking is applied as a recovery action.
Table~\ref{tab:deployment} compares the shielded controller against the nominal and backup controllers over $5$ runs with randomized obstacle placements and perturbations. While the nominal controller collides in all $5$ runs and the backup controller in $4$ out of $5$, the proposed shield successfully completes all runs without collisions, achieving a mean intervention rate of $58.2\pm26.0\%$ and a mean minimum clearance of $0.679\pm0.677$~m. These results demonstrate the effectiveness of the verified safety filter in maintaining collision-free operation despite sensing uncertainties, actuation delays, and environmental disturbances.
\begin{table}[!h]
  \centering
  \caption{RoboRacer simulation over $5$ randomized obstacle placements (mean $\pm$ std). Min. Clear.: minimum distance kept to obstacle, negative on collision.}
\label{tab:deployment}
  \setlength{\tabcolsep}{4pt}
  \begin{tabular}{lcccc}
  \hline
   & Collisions & Interventions (\%) & Min. Clear. (m) \\
  \hline
  Nominal & 5/5 & $0.0\pm0.0$    & $-0.067\pm0.037$  \\
  Backup  & 4/5 & $100\pm0.0$    & $-0.041\pm0.056$ \\
  \textbf{Shield (Our)}  & \textbf{0/5} & \textbf{58.2}$\pm$\textbf{26.0}  & \textbf{+0.679}$\pm$\textbf{0.677} \\
  \hline
  \end{tabular}
  \end{table}

We further validate the proposed approach by deploying the same verified certificate on a previously unseen real-world racing track using the physical RoboRacer platform. As shown in Fig.~\ref{fig:roboracer_deployment} (left), the shielded controller closely follows the nominal trajectory in the obstacle-free scenario, while successfully navigating around six obstacles and returning toward the reference path in the more challenging scenario. The left- and right-hand avoidance maneuvers illustrated in Fig.~\ref{fig:roboracer_deployment} (right) further demonstrate the flexibility of the proposed shielding mechanism; while the planner determines the passing direction, the safety filter independently verifies or overrides the resulting steering commands to preserve safety. A video demonstration of the real-world RoboRacer experiments is available \href{https://youtu.be/OEIcr31zzbw}{\underline{here}}.

\begin{figure*}[!h]
    \centering
    \includegraphics[width=\linewidth]{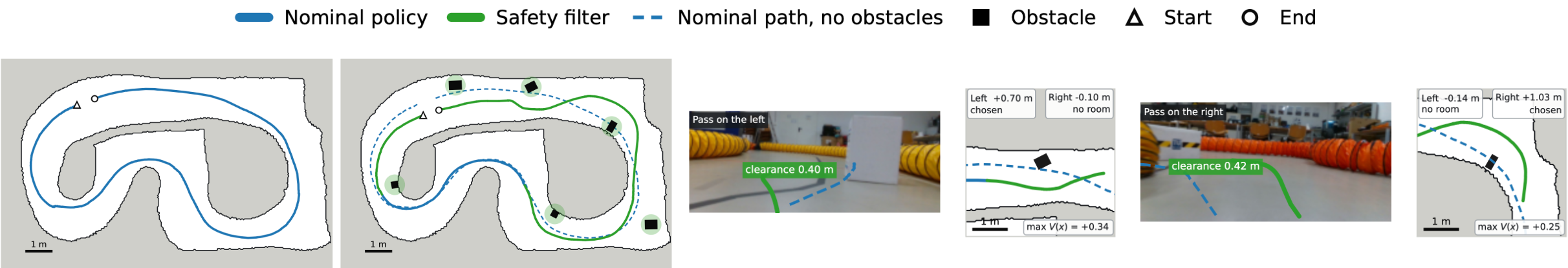}
    \caption{RoboRacer deployment of the proposed safety filter.
    (a) Trajectories showing an obstacle-free reference run
    and a run with six obstacles. The blue dashed curve denotes the planned
    path, while blue and green segments indicate nominal and safety-filter
    control, respectively.
    (b) Left- and right-passing maneuvers with camera views and corresponding
    local trajectories. The planner selects the passing side based on available
    room, while the safety filter checks steering commands.
    Negative room annotations indicate unavailable sides.
    Max $V(x)$ denotes the maximum value $V_\theta(f(x_t,u_t))$
    of the nominal successors rejected by the filter.}
    \label{fig:roboracer_deployment}
\end{figure*}
 
\section{Conclusion}

In this paper, we introduced a verification-guided training framework for neural Hamilton--Jacobi certificates based on probabilistic enumeration of unsafe state-space regions. By providing diverse, region-level counterexamples, our approach offers richer refinement guidance than local gradient-based searches, leading to larger certified HJ value function regions, while achieving substantially faster convergence times on all tested benchmarks. Crucially, our approach enables the synthesis of certificates on complex tasks where pointwise counterexamples fail. Our real-world experiment demonstrated that the verified neural HJ certificate obtained using the proposed strategy can support safe obstacle avoidance while preserving nominal driving between safety interventions.
Building on these promising results, future work will focus on extending this framework to address more challenging control tasks, such as those with high-dimensional state spaces, or improving the enumeration with semantically richer geometric representations, e.g., zonotopes, oriented bounding boxes, or polyhedra.


\section*{Acknowledgments}
This research was funded in whole or in part by the Austrian Science Fund (FWF) 10.55776/ESP1944725 and 10.55776/DOC1345324.

\bibliographystyle{unsrt}  
\bibliography{references}

@article{YangHWLL25,
  author       = {Yujie Yang and
                  Hanjiang Hu and
                  Tianhao Wei and
                  Shengbo Eben Li and
                  Changliu Liu},
  title        = {Scalable Synthesis of Formally Verified Neural Value Function for Hamilton-Jacobi Reachability Analysis},
  journal      = {J. Artif. Intell. Res.},
  volume       = {83},
  year         = {2025},
  url          = {https://doi.org/10.1613/jair.1.16946},
  doi          = {10.1613/JAIR.1.16946},
}

@article{DBLP:journals/tac/FisacAZKGT19,
  author       = {Jaime F. Fisac and
                  Anayo K. Akametalu and
                  Melanie N. Zeilinger and
                  Shahab Kaynama and
                  Jeremy H. Gillula and
                  Claire J. Tomlin},
  title        = {A General Safety Framework for Learning-Based Control in Uncertain
                  Robotic Systems},
  journal      = {{IEEE} Trans. Autom. Control.},
  volume       = {64},
  number       = {7},
  pages        = {2737--2752},
  year         = {2019},
  doi          = {10.1109/TAC.2018.2876389},
}

@article{DBLP:journals/trob/BorquezCWB24,
  author       = {Javier Borquez and
                  Kaustav Chakraborty and
                  Hao Wang and
                  Somil Bansal},
  title        = {On Safety and Liveness Filtering Using Hamilton-Jacobi Reachability
                  Analysis},
  journal      = {{IEEE} Trans. Robotics},
  volume       = {40},
  pages        = {4235--4251},
  year         = {2024},
  doi          = {10.1109/TRO.2024.3454470},
}

@article{DBLP:journals/automatica/BemporadMDP02,
  author       = {Alberto Bemporad and
                  Manfred Morari and
                  Vivek Dua and
                  Efstratios N. Pistikopoulos},
  title        = {The explicit linear quadratic regulator for constrained systems},
  journal      = {Autom.},
  volume       = {38},
  number       = {1},
  pages        = {3--20},
  year         = {2002},
  doi          = {10.1016/S0005-1098(01)00174-1},
}

@article{CARBS,
  title={Tune as you scale: Hyperparameter optimization for compute efficient training},
  author={Fetterman, Abraham J and Kitanidis, Ellie and Albrecht, Joshua and Polizzi, Zachary and Fogelman, Bryden and Knutins, Maksis and Wr{\u{A}}{\l}blewski, Bartosz and Simon, James B and Qiu, Kanjun},
  journal={arXiv preprint arXiv:2306.08055},
  year={2023}
}

@article{crown,
  title={Efficient neural network robustness certification with general activation functions},
  author={Zhang, Huan and Weng, Tsui-Wei and Chen, Pin-Yu and Hsieh, Cho-Jui and Daniel, Luca},
  journal={Advances in neural information processing systems},
  volume={31},
  year={2018}
}

@inproceedings{OKelly2020F1TENTH,
  author       = {Matthew O'Kelly and
                  Hongrui Zheng and
                  Dhruv Karthik and
                  Rahul Mangharam},
  title        = {{F1TENTH}: An Open-Source Evaluation Environment for Continuous
                  Control and Reinforcement Learning},
  booktitle    = {Proceedings of the NeurIPS 2019 Competition and Demonstration Track},
  series       = {Proceedings of Machine Learning Research},
  volume       = {123},
  pages        = {77--89},
  publisher    = {{PMLR}},
  year         = {2020},
}

@techreport{Coulter1992PurePursuit,
  author       = {R. Craig Coulter},
  title        = {Implementation of the Pure Pursuit Path Tracking Algorithm},
  institution  = {Robotics Institute, Carnegie Mellon University},
  number       = {CMU-RI-TR-92-01},
  address      = {Pittsburgh, PA},
  month        = jan,
  year         = {1992},
}

@inproceedings{PGD,
  author       = {Aleksander Madry and
                  Aleksandar Makelov and
                  Ludwig Schmidt and
                  Dimitris Tsipras and
                  Adrian Vladu},
  title        = {Towards Deep Learning Models Resistant to Adversarial Attacks},
  booktitle    = {6th International Conference on Learning Representations, ({ICLR})},
  publisher    = {OpenReview.net},
  year         = {2018},
}

@article{10.1016/j.automatica.2004.08.019,
author = {Mayne, D. Q. and Seron, M. M. and Rakovi\'{c}, S. V.},
title = {Robust model predictive control of constrained linear systems with bounded disturbances},
year = {2005},
publisher = {Pergamon Press, Inc.},
volume = {41},
number = {2},
doi = {10.1016/j.automatica.2004.08.019},
journal = {Automatica},
pages = {219–224},
numpages = {6}
}

@article{Chisci2001,
author = {Chisci, L. and Rossiter, J. A. and Zappa, G.},
title = {Systems with persistent disturbances: predictive control with restricted constraints},
year = {2001},
address = {USA},
volume = {37},
number = {7},
doi = {10.1016/S0005-1098(01)00051-6},
journal = {Automatica},
pages = {1019–1028},
numpages = {10},
}

@book{rawlings2017model,
  title={Model Predictive Control: Theory, Computation, and Design},
  author={Rawlings, J.B. and Mayne, D.Q. and Diehl, M.},
  isbn={9780975937730},
  url={https://books.google.ro/books?id=MrJctAEACAAJ},
  year={2017},
  publisher={Nob Hill Publishing}
}

@inproceedings{HJ,
  author       = {Somil Bansal and
                  Mo Chen and
                  Sylvia L. Herbert and
                  Claire J. Tomlin},
  title        = {Hamilton-Jacobi reachability: {A} brief overview and recent advances},
  booktitle    = {56th {IEEE} Annual Conference on Decision and Control {CDC}},
  pages        = {2242--2253},
  publisher    = {{IEEE}},
  year         = {2017},
  doi          = {10.1109/CDC.2017.8263977},
}

@inproceedings{NeuralHJ1,
  author       = {Jaime F. Fisac and
                  Neil F. Lugovoy and
                  Vicenc Rubies{-}Royo and
                  Shromona Ghosh and
                  Claire J. Tomlin},
  title        = {Bridging Hamilton-Jacobi Safety Analysis and Reinforcement Learning},
  booktitle    = {International Conference on Robotics and Automation {ICRA}},
  pages        = {8550--8556},
  publisher    = {{IEEE}},
  year         = {2019},
  url          = {https://doi.org/10.1109/ICRA.2019.8794107},
  doi          = {10.1109/ICRA.2019.8794107},
}

@inproceedings{NeuralHJ2,
  author       = {Somil Bansal and
                  Claire J. Tomlin},
  title        = {DeepReach: {A} Deep Learning Approach to High-Dimensional Reachability},
  booktitle    = {{IEEE} International Conference on Robotics and Automation, {ICRA}},
  pages        = {1817--1824},
  publisher    = {{IEEE}},
  year         = {2021},
  url          = {https://doi.org/10.1109/ICRA48506.2021.9561949},
  doi          = {10.1109/ICRA48506.2021.9561949},
}

@inproceedings{reluplex,
  author       = {Guy Katz and
                  Clark W. Barrett and
                  David L. Dill and
                  Kyle Julian and
                  Mykel J. Kochenderfer},
  title        = {Reluplex: An Efficient {SMT} Solver for Verifying Deep Neural Networks},
  booktitle    = {Computer Aided Verification - 29th International Conference, ({CAV})},
  series       = {Lecture Notes in Computer Science},
  pages        = {97--117},
  publisher    = {Springer},
  year         = {2017},
  url          = {https://doi.org/10.1007/978-3-319-63387-9\_5},
  doi          = {10.1007/978-3-319-63387-9\_5},
}

@inproceedings{CountingProve,
  author       = {Luca Marzari and
                  Davide Corsi and
                  Ferdinando Cicalese and
                  Alessandro Farinelli},
  title        = {The {\#}DNN-Verification Problem: Counting Unsafe Inputs for Deep
                  Neural Networks},
  booktitle    = {Proceedings of the Thirty-Second International Joint Conference on
                  Artificial Intelligence, ({IJCAI})},
  pages        = {217--224},
  publisher    = {ijcai.org},
  year         = {2023},
  url          = {https://doi.org/10.24963/ijcai.2023/25},
  doi          = {10.24963/IJCAI.2023/25},
}

@inproceedings{KothaBKD023,
  author       = {Suhas Kotha and
                  Christopher Brix and
                  J. Zico Kolter and
                  Krishnamurthy Dvijotham and
                  Huan Zhang},
  title        = {Provably Bounding Neural Network Preimages},
  booktitle    = {Advances in Neural Information Processing Systems 36: Annual Conference
                  on Neural Information Processing Systems (NeurIPS)},
  year         = {2023},
  url          = {http://papers.nips.cc/paper\_files/paper/2023/hash/fe061ec0ae03c5cf5b5323a2b9121bfd-Abstract-Conference.html},
}

@inproceedings{RFProve,
  author       = {Luca Marzari and
                  Manuele Bicego and
                  Ferdinando Cicalese and
                  Alessandro Farinelli},
  title        = {On the Probabilistic Learnability of Compact Neural Network Preimage
                  Bounds},
  booktitle    = {Fortieth {AAAI} Conference on Artificial Intelligence},
  pages        = {35707--35714},
  publisher    = {{AAAI} Press},
  year         = {2026},
  doi          = {10.1609/AAAI.V40I42.40883},
}

@inproceedings{eprove,
  author       = {Luca Marzari and
                  Davide Corsi and
                  Enrico Marchesini and
                  Alessandro Farinelli and
                  Ferdinando Cicalese},
  title        = {Enumerating Safe Regions in Deep Neural Networks with Provable Probabilistic
                  Guarantees},
  booktitle    = {Thirty-Eighth {AAAI} Conference on Artificial Intelligence},
  pages        = {21387--21394},
  publisher    = {{AAAI} Press},
  year         = {2024},
  doi          = {10.1609/AAAI.V38I19.30134},
}

@article{premap,
  author       = {Xiyue Zhang and
                  Benjie Wang and
                  Marta Kwiatkowska and
                  Huan Zhang},
  title        = {{PREMAP:} {A} Unifying PREiMage APproximation Framework for Neural
                  Networks},
  journal      = {J. Mach. Learn. Res.},
  volume       = {26},
  pages        = {133:1--133:44},
  year         = {2025},
  url          = {https://jmlr.org/papers/v26/24-1297.html},
}

@inproceedings{MILP,
  author       = {Vincent Tjeng and
                  Kai Yuanqing Xiao and
                  Russ Tedrake},
  title        = {Evaluating Robustness of Neural Networks with Mixed Integer Programming},
  booktitle    = {7th International Conference on Learning Representations, ({ICLR})},
  publisher    = {OpenReview.net},
  year         = {2019},
  url          = {https://openreview.net/forum?id=HyGIdiRqtm},
}

@Inbook{Rajamani2012,
author="Rajamani, Rajesh",
title="Lateral Vehicle Dynamics",
bookTitle="Vehicle Dynamics and Control",
year="2012",
publisher="Springer US",
address="Boston, MA",
pages="15--46",
isbn="978-1-4614-1433-9",
doi="10.1007/978-1-4614-1433-9_2",
url="https://doi.org/10.1007/978-1-4614-1433-9_2"
}

@inproceedings{DBLP:conf/ijcai/MarzariBM26,
  author       = {Luca Marzari and
                  Ezio Bartocci and
                  Enrico Marchesini},
  title        = {A Survey on the Verification of Reinforcement Learning Policies},
  booktitle    = {Proceedings of the Thirty-Fifth International Joint Conference on
                  Artificial Intelligence and the 29th European Conference on Artificial
                  Intelligence, {IJCAI/ECAI}},
  pages        = {7958--7966},
  publisher    = {ijcai.org},
  year         = {2026},
  doi          = {10.24963/IJCAI.2026/883},
}

@inproceedings{DBLP:conf/cav/WeiHMYNLL25,
  author       = {Tianhao Wei and
                  Hanjiang Hu and
                  Luca Marzari and
                  Kai S. Yun and
                  Peizhi Niu and
                  Xusheng Luo and
                  Changliu Liu},
  title        = {ModelVerification.jl: {A} Comprehensive Toolbox for Formally Verifying
                  Deep Neural Networks},
  booktitle    = {Computer Aided Verification - 37th International Conference, {CAV}},
  series       = {Lecture Notes in Computer Science},
  volume       = {15932},
  pages        = {395--408},
  publisher    = {Springer},
  year         = {2025},
  doi          = {10.1007/978-3-031-98679-6\_18},
}

\end{document}